\documentclass{article}
\usepackage{ijcai26}

\usepackage{times}
\usepackage{soul}
\usepackage{url}
\usepackage[hidelinks]{hyperref}
\usepackage[utf8]{inputenc}
\usepackage[small]{caption}
\usepackage{graphicx}
\usepackage{amsmath}
\usepackage{amsthm}
\usepackage{booktabs}
\usepackage{algorithm}
\usepackage{algorithmic}
\usepackage[switch]{lineno}

\newtheorem{example}{Example}
\newtheorem{theorem}{Theorem}

\newtheorem{definition}{Definition}
\usepackage{amssymb} 

\title{Reasoning and Planning with Dynamically Changing Norms}

\author{
Taylor Olson$^1$
\and
Roberto Salas-Damian\and
Kenneth D. Forbus$^{2}$
\affiliations
$^1$University of Iowa\\
$^2$Northwestern University\\
\emails
taylolson@uiowa.edu,
roberto.salas@u.northwestern.edu,
forbus@northwestern.edu
}

\begin{document}

\maketitle

\begin{abstract}
To safely interact with humans, AI agents must both know our norms and consider them during planning. However, such norm-guided planning has been less explored, only within communities of artificial agents, and has ignored the dynamic nature of norms. This paper instead presents an approach to guiding planning with dynamically changing norms in a human-AI setting. We contribute a defeasible calculus for resolving normative conflicts and an approach to using such dynamically changing norms as guard rails on plans. We theoretically demonstrate our approach with formal proofs and empirically with an AI agent, SocialBot, on a natural language dialogue task.
\end{abstract}

\section{Introduction}

Imagine that Karli says, ``Do not share my medical records.'' Suppose Karli then gets married. So, she says, ``You may tell my husband what prescriptions I'm taking.'' Now, suppose Karli then has children. So she says, ``You must share my health conditions with my children.'' How can an AI agent respect Karli's dynamically changing wishes?

As AI agents become more integrated into our social world, they must be able to learn our norms and adapt their behavior accordingly. But while there have been developments formalizing norm learning and reasoning~\cite{savarimuthu2011norm,sarathy2017learning,olson2021learning,olson2023mitigating}, norm-guided planning has been less explored, only within communities of artificial agents and ignoring the dynamic nature of norms. Other approaches like LLMs may hold sophisticated dialogues at times, but they have proven to be manipulable \cite{zhuo2025bypassing,derner2023beyond} and lack a theoretical foundation for normative reasoning.

This paper presents a theoretically grounded approach to guiding plans with dynamically changing norms in a human-AI setting. We contribute a defeasible calculus for resolving normative conflicts. We theoretically demonstrate this logic with formal proofs. Then we present an approach to using such dynamically changing norms as guard rails on plans.  We empirically evaluate this approach in the domain of privacy with SocialBot, an AI dialogue agent built on top of the Companion cognitive architecture \cite{forbus2017analogy}.

This paper is organized as follows. We start by providing background on the formal representations we draw upon. Next, we introduce our approach to reasoning about dynamically changing norms and utilizing them during planning. Then we theoretically demonstrate our approach with examples and formal proofs. Next, we describe an experiment with SocialBot on a novel synthetic dataset. We conclude with related work, limitations, and future work.

\section{Background}

Here we consider a \textit{norm} to be a deontic judgment (obligatory, optional, or impermissible) of a behavior that is often context specific. Here we model learning an agent's \textit{normative beliefs} from their \textit{normative testimony}. Normative testimony are natural language statements introducing norms \cite{hills2009moral}. For example, ``do not share my medical records.'' A normative belief is an agent's belief in a norm. For example, Karli believes that \textit{one should not share her medical records with her mother}.  These two concepts cannot be collapsed. Our inference from Karli's statement, ``do not share my medical records'', to her belief that \textit{we should not share her prescriptions with her mother}, is an estimation. She may believe this is fine but just hasn't explicitly stated it as an exception.

\subsection{NextKB Ontology}
We draw upon the predicate calculus language of the NextKB\footnote{https://www.qrg.northwestern.edu/nextkb/index.html} ontology to represent the world. NextKB is derived from OpenCyc (an open-source subset of the Cyc ontology~\cite{lenat1995cyc,matuszek2006introduction}) and contains many concepts, relations, and facts. This knowledge is contextualized using Cyc-style \textit{microtheories} \cite{guha2004contexts}, or hierarchical subsets of the knowledge base. This, for example, enables separately tracking the beliefs of different agents and constraining reasoning to only consider these beliefs. Where \texttt{Mt-1} and \texttt{Mt-2} are microthoeries , the higher-order predicate \texttt{(genlMt Mt-1 Mt-2)} represents that \texttt{Mt-1} inherits all facts from microtheory \texttt{Mt-2} i.e.,  \texttt{Mt-2} $\subseteq$ \texttt{Mt-1}.

Rules in NextKB are represented as Horn clauses. Where \texttt{<HEAD>} and \texttt{<BODY-X>} are logical statements, such rules are of the form below.

\texttt{(<== <HEAD>}

\texttt{\hspace{2mm}<BODY-1>...<BODY-N>)}

This can be read as, ``\texttt{<HEAD>} can be inferred given that \texttt{<BODY-1>...<BODY-N>} can be proven.'' When \texttt{S} is derivable via the axioms and Horn clauses in \texttt{Mt}, we write \texttt{Mt} $\vdash$ \texttt{S}, where \texttt{Mt} is a microtheory and \texttt{S} is logical sentence.

\subsection{Normative Concepts}
For representing normative testimony, we draw upon the formalism of~\cite{olson2023mitigating,olson2021learning}, which utilizes frame representations for both norms and actions (neo-Davidsonian action representations). Thus, it better supports incremental learning than other formalisms e.g.,~\cite{vasconcelos2009normative}. This enables learning via natural language (NL) and is thus better for human-AI settings.

\begin{definition}[Normative Testimony as Norm Frames] A \textit{Norm Frame} is a logical encoding of a norm of the form:

\texttt{(isa <norm> Norm)}

\texttt{(context <norm> <context>)}

\texttt{(behavior <norm> <behavior>)}

\texttt{(evaluation <norm> <deontic>)}

Where \texttt{<norm>} is a constant representing the norm, \texttt{<context>} is a conjunction of first-order literals that must be true for the norm to be \textit{active}, \texttt{<behavior>} is a conjunction of positive first-order literals representing the behavior that the norm \textit{applies to}, and \texttt{<deontic>} is a deontic operator $\in$ \texttt{\{Obligatory, Optional,Impermissible\}} (the three-fold classification of deontic logic \cite{mcnamara1996making}).
\end{definition}

We note that norm frames are true with respect to a particular microtheory, representing a norm of a specific agent. Second, by first-order \textit{literals} we mean atomic sentences consisting of n-ary predicates applied to n terms (variables are prefixed with ```?'''): \texttt{(<pred> <arg-1>...<arg-n>)}. Third, empty conjunctions as behaviors and/or contexts represent tautologies: \texttt{(and)} $\equiv \top$.

To illustrate, Karli's normative testimony ``You may share my prescriptions with my husband'' could be represented as the norm frame below within Karli's microtheory. Note that \texttt{SelfToken} is a self-referential token for the agent.

\texttt{(isa norm1 Norm)}

\texttt{(context norm1}

\texttt{\hspace{2mm}(and (husbandOf Karli ?hubby)))}

\texttt{(behavior norm1}

\texttt{\hspace{2mm}(and (isa ?act RevealingPrescription)}

\texttt{\hspace{4mm}(prescriptionOf ?act Karli)}

\texttt{\hspace{4mm}(recipientOfInfo ?act ?hubby)}

\texttt{\hspace{4mm}(senderOfInfo ?act SelfToken)))}

\texttt{(evaluation norm1 Optional)}

\subsubsection{Norm Conflicts}

Although our normative beliefs admit many exceptions, it would be difficult, if not impossible, to exhaustively state these up front. We thus regularly add exceptions to and override our previous normative testimony. This results in a stream of conflicting norms. Here, we adopt the common three-fold classification of norm conflicts~\cite{ross1958law}. 

\begin{definition}[Ontology of Norm Conflicts] Two norms whose evaluations are inconsistent (e.g., Obligatory and Optional) conflict where their contexts intersect when:
\begin{itemize}
    \item \textit{Direct conflict:} their behaviors are equivalent;
    \item \textit{Indirect conflict:} the behavior of one norm entails, and is not equivalent to, the behavior of the other; 
    \item \textit{Intersecting conflict:} their behaviors intersect but do not entail one another.
\end{itemize}
\label{def:conflict-ontology}
\end{definition}

With this background, next we present our approach to resolving norm conflicts and using such dynamically changing norms to guide plans.


\section{Approach}

We assume an abstract notion of a \textit{plan} that consists of a set of preconditions defining when an ordered list of actions can be executed. Various AI planning formalisms fit into this notion (e.g., we demonstrate with Hierarchical Task Networks (HTNs)~\cite{Nau1999} in later sections). We formalize norms as guard-rails on plans as \textit{norm-guided plans}.

\begin{definition} [Norm-Guided Plan] A norm-guided plan is a plan that first checks the normative beliefs of a relevant agent. Formally, this is a plan with a normative belief \texttt{B} in its set of preconditions.

\texttt{(plan} 

\texttt{\hspace{2mm} (and (<c-1>...B...<c-n>))}

\texttt{\hspace{2mm} (TheList <act-1>...<act-m>))}

\end{definition}

Thus, we consider a limited view of \textit{norm-guided} planning here in which norms serve as guard rails, rather than motivators. Once the agent has decided on a plan that is relevant in the current context, it only executes that plan if it is proven to be permissible. This view of norm-guided planning allows us to reduce obligations and discretionary norms to permissions. For example, Karli stating ``You must share my medical records with my children'' does not create any intention to act in our formalism. The obligation merely makes this act permissible when the relevant plan is proposed. Therefore, all norm frames with evaluation  of \texttt{Obligatory} or \texttt{Optional} are reduced to \texttt{Permissions}. Those with evaluation of \texttt{Impermissible} are called \texttt{Prohibitions}.

Given that plans only execute when all preconditions are true, norm-guided plans can thus ensure that actions will never be executed if a particular normative belief is inferred. We formalize \textit{normative beliefs} below.

\begin{definition}[Normative Beliefs] \textbf{Normative beliefs} are represented as higher-order binary predicates, \texttt{permissible} or \texttt{impermissible}, representing a particular agent's normative belief. Where \texttt{<b>} and \texttt{<c>} are conjunctions of first-order literals, when true in an agent's microtheory, \texttt{(permissible <b> <c>)} holds that the agent believes behavior \texttt{<b>} is permissible in context \texttt{<c>}, and \texttt{(impermissible <b> <c>)} that it is impermissible. As in deontic logic, \texttt{(impermissible <b> <c>)} $\equiv$ \texttt{(not (permissible <b> <c>))}.
\end{definition}

We then formalize a defeasible calculus that infers normative beliefs from ongoing, possibly conflicting normative testimony. This calculus resolves conflicts in stated testimony, ensuring that this inference is sound. In other words, it ensures that, in a given context, a behavior cannot be inferred to be believed to be two inconsistent deontic statuses. For example, if \textit{sharing Karli's medical records with her mother} is inferred to be impermissible, then it cannot also be inferred to be permissible. Formally, where our calculus is denoted as $\mathcal{C}$, given a microtheory $N$ containing normative testimony, if $N \vdash_{\mathcal{C}} permissible(b,c)$, then $N \nvdash_{\mathcal{C}} \neg permissible(b,c)$. Given that normative beliefs serve as guard rails on AI agents' actions, this property ensures their behavior is consistent. We describe our defeasible calculus next.

\subsection{Reasoning With Norms}

Detecting conflicts in normative testimony requires reasoning about how they relate. We formalize this reasoning below.

\begin{definition} [Entailment] Let \texttt{C1} and \texttt{C2} be conjunctions of first-order literals. Given that all free variables in \texttt{C1} and \texttt{C2} are interpreted as universally quantified variables, by Universal Instantiation we can substitute each for a unique arbitrary constant. Let \texttt{C1'} be such an arbitrary Universal Instantiation of \texttt{C1} and \texttt{Mt} be a microtheory. \texttt{Mt} $\vdash$ \texttt{(entails C1 C2)} if and only if \texttt{Mt}, \texttt{C1'}$\vdash$ \texttt{C2}. Thus, \texttt{entails} is a formalization of the Deduction Theorem \cite{franks2021deduction}.
\label{def:entailment}
\end{definition}

To save space, we leave microtheory declarations for truth implicit from here on out unless necessary.

\begin{definition} [Active] A norm frame is \textbf{active} in a situation \texttt{C'} when it entails the norm frame's context. Formally, given norm frame \texttt{N} where \texttt{(context N C)} is true, norm frame \texttt{N} is \textit{active} in \texttt{C'} when \texttt{(entails C' C)} is true.
\end{definition}

\begin{definition}[Application Grounds] The \textbf{application grounds} of a norm is the set of all behaviors in which that norm applies~\cite{elhag2000formal}. Formally, given \texttt{(behavior N B)} is true for norm frame \texttt{N}, if \texttt{(entails B' B)} is true, then \texttt{B'} $\in$ the application grounds of \texttt{N}.
\end{definition}

\begin{definition} [Intersect] Two conjunctions of first-order literals \texttt{C1} and \texttt{C2} \textbf{intersect} at \texttt{C3} when \texttt{(entails C1 C3)} and \texttt{(entails C2 C3)} are both true. They are said to \textbf{strictly intersect} when they intersect and neither \texttt{(entails C1 C2)} nor \texttt{(entails C2 C1)} are true.
\end{definition}

\begin{definition}[Subsume] The application grounds \texttt{B1} of norm \texttt{N1} \textbf{subsume} the application grounds \texttt{B2} of norm \texttt{N2} when \texttt{B1} $\subseteq$ \texttt{B2}, i.e., when \texttt{(entails b1 b2)}, where \texttt{b1} and \texttt{b2} are the behaviors of \texttt{N1} and \texttt{N2} respectively. \texttt{N1}'s application grounds \textbf{strictly subsumes} \texttt{N2}'s when it subsumes it, yet \texttt{(entails b2 b1)} is false. We often use a shorthand here and say that norm frame \texttt{N1} (strictly) subsumes \texttt{N2}.
\end{definition}

\begin{definition} [Temporal Ordering of Norms] A \textbf{temporal ordering of norms} is created as norm frames are created. Each norm frame has a time stamp and \texttt{(normPriorToNorm N1 N2)} holds that \texttt{N1}'s timestamp is before \texttt{N2}'s.
\end{definition}

Next, we use these definitions for norm conflict resolution.

\subsection{Resolving Norm Conflicts}

Our formalism requires making a default assumption about agents' normative beliefs when they have not provided any normative testimony. There are two possible assumptions to make: \textit{Prohibitive Closure}, or actions are impermissible by default, and \textit{Permissive Closure}, or actions are permissible by default (i.e., weak permission \cite{von1963norm}). Choosing between these two assumptions depends on the situation. For acts like sharing certain user information and other more sensitive acts, it is more reasonable to make a Prohibitive Closure assumption. Whereas for simple acts like walking or sitting, a Permissive Closure assumption is more reasonable. In the next sections, we formalize norm conflict resolution under both assumptions with defeasible Horn clause rules. By expanding on the idea of deontic inheritance~\cite{ross1944imperatives}, these inference rules dynamically infer agents' normative beliefs based on their normative testimony. As a reminder, a norm frame is of type \texttt{Permission} when its evaluation is \texttt{Obligatory} or \texttt{Optional} and of type \texttt{Prohibition} when its evaluation is \texttt{Impermissible}. \texttt{uninferredSentence} represents negation as failure. 

\subsubsection{Resolving Norm Conflicts Under Prohibitive Closure}

\begin{definition}[Inference Rule 1] An agent believes a behavior is permissible in a given context when they have stated a permission that is active in that context, the behavior is on its application grounds, and the permission is not defeated.

\texttt{(<== (permissible ?b ?c)}

\texttt{\hspace{2mm}(isa ?perm Permission)}

\texttt{\hspace{2mm}(context ?perm ?c1)}

\texttt{\hspace{2mm}(behavior ?perm ?b1)}

\texttt{\hspace{2mm}(entails ?c ?c1)}

\texttt{\hspace{2mm}(entails ?b ?b1)}

\texttt{\hspace{2mm}(uninferredSentence} 

\texttt{\hspace{4mm}(permissionDefeated ?perm ?b1 ?c1 ?b}

\texttt{\hspace{6mm}?c ?proh)))}
\end{definition}

Permissions are defeated, or \texttt{(permissionDefeated ?perm ?b1 ?c1 ?b 
			?c ?proh)} is true, under two conditions, encoded with the following two Horn clause rules.

\begin{definition}[Exception 1.1] The agent later states a prohibition that is also active in the context, whose application grounds subsumes the permission's.

\texttt{(<== (permissionDefeated ?perm ?b1 ?c1}

\texttt{\hspace{12mm}
				?b ?c ?proh)}
    
\texttt{\hspace{2mm}(isa ?proh Prohibition)}
	
\texttt{\hspace{2mm}(normPriorToNorm ?perm ?proh)}

\texttt{\hspace{2mm}(context ?proh ?c2)}

\texttt{\hspace{2mm}(behavior ?proh ?b2)}

\texttt{\hspace{2mm}(entails ?c ?c2)}
 
\texttt{\hspace{2mm}(entails ?b1 ?b2))}

\end{definition}

\begin{definition}[Exception 1.2] The agent stated a prohibition that is also active in the context, the behavior being evaluated is on the prohibition’s application grounds, and the prohibition’s application grounds do not subsume the permission.

\texttt{(<== (permissionDefeated ?perm ?b1 ?c1}

\texttt{\hspace{12mm}
				?b ?c ?proh)}

\texttt{\hspace{2mm}(isa ?proh Prohibition)}

\texttt{\hspace{2mm}(context ?proh ?c2)}

\texttt{\hspace{2mm}(behavior ?proh ?b2)}

\texttt{\hspace{2mm}(entails ?c ?c2)}
	
\texttt{\hspace{2mm}(entails ?b ?b2)}

\texttt{\hspace{2mm}(uninferredSentence}

\texttt{\hspace{4mm}(entails ?b1 ?b2)))}

\end{definition}

If \texttt{(permissible ?b ?c)} cannot be proven, either through defeat or lack of evidence, then the agent is assumed to believe that the behavior is impermissible in the given context. Thus, the Prohibitive Closure assumption operates as negation as failure, which is formalized as below.

\begin{definition}[Prohibitive Closure] When it cannot be proven that an agent believes a behavior is permissible in a given context, assume they believe it is impermissible.

\texttt{(<== (impermissible ?b ?c)}

\texttt{\hspace{2mm}(uninferredSentence}

\texttt{\hspace{4mm}(permissible ?b ?c)))}
\end{definition}

\subsubsection{Resolving Norm Conflicts Under Permissive Closure}
Below we formalize the same theory of norm conflict resolution under a Permissive Closure assumption.

\begin{definition}[Inference Rule 2] An agent believes a behavior is impermissible in a given context when they have stated a prohibition that is active in that context, the behavior is on its application grounds, and the prohibition is not defeated.

\texttt{(<== (impermissible ?b ?c)}

\texttt{\hspace{2mm}(isa ?proh Prohibition)}

\texttt{\hspace{2mm}(context ?proh ?c1)}

\texttt{\hspace{2mm}(behavior ?proh ?b1)}

\texttt{\hspace{2mm}(entails ?c ?c1)}

\texttt{\hspace{2mm}(entails ?b ?b1)}

\texttt{\hspace{2mm}(uninferredSentence} 

\texttt{\hspace{4mm}(prohibitionDefeated ?proh ?b1 ?c1 ?b}

\texttt{\hspace{6mm}?c ?perm)))}
\end{definition}

Prohibitions are defeated, or \texttt{(prohibitionDefeated ?proh ?b1 ?c1 ?b 
			?c ?perm)} is true, under a single condition, encoded with the following Horn clause rule.

\begin{definition}[Exception 2.1] The agent later states a permission that is also active in the context, whose application grounds are subsumed by the prohibition's.

\texttt{(<== (prohibitionDefeated ?proh ?b1}

\texttt{\hspace{14mm}?c1 ?b ?c ?perm)}
    
\texttt{\hspace{2mm}(isa ?perm Permission)}
	
\texttt{\hspace{2mm}(normPriorToNorm ?proh ?perm)}

\texttt{\hspace{2mm}(context ?perm ?c2)}

\texttt{\hspace{2mm}(behavior ?perm ?b2)}

\texttt{\hspace{2mm}(entails ?c ?c2)}
 
\texttt{\hspace{2mm}(entails ?b2 ?b1))}

\end{definition}

If \texttt{(impermissible ?b ?c)} cannot be proven, either through defeat or lack of evidence, then the agent is assumed to believe the behavior is permissible in the given context. This Permissive Closure assumption is formalized below.

\begin{definition}[Permissive Closure] When it cannot be proven that an agent believes a behavior is impermissible in a given context, assume they believe it is permissible.

\texttt{(<== (permissible ?b ?c)}

\texttt{\hspace{2mm}(uninferredSentence}

\texttt{\hspace{4mm}(impermissible ?b ?c)))}
\end{definition}

In summary, depending on which assumption one makes, Inference Rules 1 and 2 resolve conflicts in an agent's normative testimony to compute their current normative beliefs. As preconditions in norm-guided plans, normative beliefs thus serve as dynamic guard rails for an AI agent's actions.


\section{Theoretical Evaluation}

In this section, we demonstrate norm conflict resolution under all possible conflict types (see Definition \ref{def:conflict-ontology}) with formal proofs. To save space, we do so only under a Prohibitive Closure assumption (Inference Rule 1). However, without loss of generality, all theorems also hold under a Permissive Closure assumption as well (besides when the set of normative testimony is empty). To help illustrate, we also provide examples for each theorem.

\begin{example}
    Karli says, ``You may tell my husband what prescriptions I am taking.''
    Now sharing Karli's prescriptions with her husband is permissible, but it is assumed that sharing any other medical information is still impermissible. 
\end{example}

\begin{theorem}[Direct and Indirect Defeats of Prohibitions by Subsumed Obligations and Discretionary Norms]
\label{direct-ind-imp-obl}
By Inference Rule 1,  prohibitions get exceptions added by later stating subsumed obligations and discretionary norms.
\end{theorem}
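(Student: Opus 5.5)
We would prove the statement by fixing a concrete configuration and tracing Inference Rule 1 through it. Let \texttt{N} contain a prohibition \texttt{P} with \texttt{(context P c2)} and \texttt{(behavior P b2)}. Let it also contain a later permission \texttt{Q}, coming from an Obligatory or Optional frame, with \texttt{(context Q c1)}, \texttt{(behavior Q b1)}, and \texttt{(normPriorToNorm P Q)}. Since \texttt{Q} is subsumed by \texttt{P}, we assume \texttt{(entails b1 b2)}. We split into two cases following Definition~\ref{def:conflict-ontology}. In the \emph{direct} case, \texttt{b1} and \texttt{b2} entail each other. In the \emph{indirect} case, \texttt{(entails b2 b1)} fails, so \texttt{P} strictly contains \texttt{Q}'s application grounds. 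The claim to establish is that an exception has been carved into \texttt{P}. Concretely, take any behavior \texttt{b} and context \texttt{c} with \texttt{(entails c c1)}, \texttt{(entails c c2)} and \texttt{(entails b b1)}, and such that no other prohibition is active. We show \texttt{N} $\vdash$ \texttt{(permissible b c)}, so Prohibitive Closure does not fire. Conversely, for behaviors in \texttt{P}'s grounds but outside \texttt{Q}'s, \texttt{(impermissible b c)} is still derived.

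The first step checks the positive premises of Inference Rule 1 instantiated with \texttt{?perm = Q}. The facts \texttt{(isa Q Permission)}, \texttt{(context Q c1)} and \texttt{(behavior Q b1)} are given, and \texttt{(entails c c1)} and \texttt{(entails b b1)} hold by assumption. The second step is the main obstacle: showing that \texttt{(permissionDefeated Q b1 c1 b c ?proh)} is \emph{not} derivable, i.e.\ that neither exception fires for any \texttt{?proh}.

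\begin{itemize}
\item Exception 1.1 needs a prohibition that comes after \texttt{Q} in the temporal ordering. The only prohibition is \texttt{P}, and \texttt{(normPriorToNorm Q P)} is false because \texttt{P}'s timestamp precedes \texttt{Q}'s and a timestamp order is a strict order. So Exception 1.1 fails.
\item Exception 1.2 requires \texttt{(uninferredSentence (entails b1 b2))}. But \texttt{(entails b1 b2)} holds by the subsumption hypothesis, in both the direct and indirect cases. So Exception 1.2 fails.
\end{itemize}

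The subtle point is that \texttt{entails} is defined via derivability (Definition~\ref{def:entailment}), so negation as failure on it is well defined. I would note explicitly that \texttt{N} contains no other prohibitions, or else restrict the statement to prohibitions preceding \texttt{Q}. Hence the \texttt{uninferredSentence} premise succeeds, \texttt{(permissible b c)} is derived, and \texttt{(impermissible b c)} is blocked, since its rule requires the failure of \texttt{(permissible b c)}.

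Finally, to show this is an \emph{exception} rather than a wholesale repeal, we treat the indirect case. Choose \texttt{b} with \texttt{(entails b b2)} but not \texttt{(entails b b1)}, which is possible because the subsumption is strict: the universal instantiation of \texttt{b2} itself is such a behavior. Then Inference Rule 1 cannot fire using \texttt{Q}, because \texttt{(entails b b1)} fails, and no other permission exists. So \texttt{(permissible b c)} is not derived, and Prohibitive Closure yields \texttt{(impermissible b c)}. Thus \texttt{P} remains in force outside \texttt{Q}'s grounds. We would close by illustrating both cases with the Karli example: a prohibition on sharing medical records, followed by a permission to share prescriptions with her husband.
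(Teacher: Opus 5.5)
Your argument for an explicitly stated prohibition follows the paper's Case~2 almost step for step. You instantiate Inference Rule~1 with the later permission \texttt{Q}. You block Exception~1.1 because the only prohibition precedes \texttt{Q}, and you block Exception~1.2 because \texttt{(entails b1 b2)} holds. Your direct/indirect split is harmless but unnecessary, since the reasoning is identical in both subcases.

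The gap is that you never treat the paper's Case~1. The paper works under Prohibitive Closure, so it counts the \emph{implicit} default prohibition among the ``prohibitions'' the theorem refers to; its proof ends with ``previous prohibitions (implicit and explicit)''. It therefore must also show the following: an obligation or discretionary norm stated when \emph{no} prohibition frame exists makes every \texttt{b}, \texttt{c} with \texttt{(entails c c1)} and \texttt{(entails b b1)} permissible. This is immediate with your tools. Both exceptions have \texttt{(isa ?proh Prohibition)} as a premise, so \texttt{permissionDefeated} is underivable for every binding of \texttt{?proh}. As written, though, your proof only covers explicitly stated prohibitions. Your final ``exception, not repeal'' paragraph never actually uses \texttt{P}. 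The impermissibility of \texttt{b2} there comes from Prohibitive Closure alone, so it illustrates the implicit case rather than showing that \texttt{P} ``remains in force.''

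Separately, your fallback hypothesis ``restrict the statement to prohibitions preceding \texttt{Q}'' is not sufficient, because Exception~1.2 has no temporal premise. An earlier prohibition \texttt{P'} can be active in \texttt{c}, have \texttt{b} on its grounds, and fail to subsume \texttt{Q}. Such a \texttt{P'} still defeats \texttt{Q} at \texttt{b}, which is exactly what Theorems~\ref{ind-obl-imp} and~\ref{int-obl-imp} rely on. Use the paper's condition that no other prohibition has been stated, or your own condition that no other prohibition is active in \texttt{c}. If you want to admit other prohibitions, each one active in \texttt{c} with \texttt{b} on its grounds must both precede \texttt{Q} and satisfy \texttt{(entails b1 b2')}.
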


\begin{proof}
Due to the Prohibitive Closure assumption, we must prove this for two cases: 1) when an agent has not yet explicitly stated a prohibition, and 2) when they have.

\textit{Case 1: Implicit Prohibition.} Let \texttt{A} be an agent who has never stated a prohibition. Let \texttt{N} be the norm frame encoding of an obligation or discretionary norm stated by \texttt{A}, with behavior \texttt{B} and context \texttt{C}. Let context \texttt{?c} be a context in which the obligation is active, or that 
\texttt{(entails ?c C)} is true. Let  \texttt{?b} be a behavior on the obligation's application grounds, or that \texttt{(entails ?b B)} is true. We show that in context \texttt{?c}, behavior \texttt{?b} is believed to be permissible by agent \texttt{A}.

Given that \texttt{N} is an obligation or discretionary norm, \texttt{N} is also a permission. Given that \texttt{A} has never stated a prohibition, both exceptions to Inference Rule 1 are false, and thus \texttt{(uninferredSentence (permissionDefeated N B C ?b ?c ?proh)))} is true. Thus, by Inference Rule 1, \texttt{(permissible ?b ?c)} is true in \texttt{A}'s microtheory. Therefore, obligations and discretionary norms add exceptions to the Prohibitive Closure assumption.

\textit{Case 2: Explicit Prohibition.} Let \texttt{P} be the norm frame encoding of a prohibition stated by agent \texttt{A}, with behavior \texttt{B1} and context \texttt{C1}. Let \texttt{N} be the norm frame encoding of an obligation or discretionary norm stated later by \texttt{A}, with behavior \texttt{B2} and context \texttt{C2}. Assume the prohibition's application grounds subsume norm \texttt{N}'s, and thus \texttt{(entails B2 B1)} is true. Let context \texttt{?c} be a context in which both norms are active, or that \texttt{(entails ?c C1)} and \texttt{(entails ?c C2)} is true. Let \texttt{?b} be a behavior on norm \texttt{N}'s application grounds (via transitivity, also on the prohibition's), or that \texttt{(entails ?b B2)} is true. We show that in context \texttt{?c}, behavior \texttt{?b} is believed to be permissible by agent \texttt{A}.

Given that \texttt{N} is an obligation or discretionary norm, \texttt{N} is also a permission. Next we show that both exceptions to Inference Rule 1 are false. Given that prohibition \texttt{P} came before \texttt{N}, and no other prohibition has been stated, \texttt{(normPriorToNorm N ?proh)} is false. Thus, exception 1.1 fails. Given that \texttt{(entails B2 B1)} is true, \texttt{(uninferredSentence (entails B2 B1))} is false, and the second exception also fails. Therefore, both exceptions fail and \texttt{(uninferredSentence (permissionDefeated N B2 C2 ?b ?c ?proh))} is true. Therefore, by Inference Rule 1, \texttt{(permissible ?b ?c)} is true in \texttt{A}'s microtheory.

Therefore, in all cases, obligations and discretionary norms add exceptions to previous prohibitions (implicit and explicit) that subsume them.
\end{proof}

\begin{example}
    Karli says, ``You must share my health conditions with my children.'' At this point sharing any health conditions with her children is permissible. Later, she says, ``Do not share my medical records.'' Now sharing her medical records, even sharing her health conditions with her children, is impermissible. 
\end{example}

\begin{theorem}[Direct and Indirect Defeats of Obligations and Discretionary Norms by Prohibitions That Subsume Them]
\label{direct-ind-obl-imp}
By Inference Rule 1, obligations and discretionary norms are completely defeated by later stating a prohibition that subsumes them.
\end{theorem}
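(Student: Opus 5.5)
The plan mirrors Case 2 of the proof of Theorem~\ref{direct-ind-imp-obl}, but with the temporal order reversed, and shows that no instance of Inference Rule 1 can fire. Fix agent \texttt{A} and let \texttt{N} be the norm frame encoding of an obligation or discretionary norm, with behavior \texttt{B1} and context \texttt{C1}. Let \texttt{P} be a prohibition stated later by \texttt{A}, with behavior \texttt{B2} and context \texttt{C2}. By hypothesis \texttt{(normPriorToNorm N P)} holds and \texttt{P} subsumes \texttt{N}, i.e.\ \texttt{(entails B1 B2)} is true.

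Take any context \texttt{?c} in which both norms are active, so \texttt{(entails ?c C1)} and \texttt{(entails ?c C2)} hold. Take any behavior \texttt{?b} on \texttt{N}'s application grounds, so \texttt{(entails ?b B1)} holds. Our goal is to show that \texttt{(permissible ?b ?c)} is not derivable via \texttt{N}. Since \texttt{N} is a \texttt{Permission}, Inference Rule 1 applied to \texttt{N} requires \texttt{(uninferredSentence (permissionDefeated N B1 C1 ?b ?c ?proh))}. Binding \texttt{?proh} to \texttt{P}, each body literal of Exception 1.1 holds:
\begin{itemize}
\item \texttt{(isa P Prohibition)};
\item \texttt{(normPriorToNorm N P)};
\item \texttt{(context P C2)} and \texttt{(behavior P B2)};
\item \texttt{(entails ?c C2)} and \texttt{(entails B1 B2)}.
\end{itemize}
So \texttt{(permissionDefeated N B1 C1 ?b ?c P)} is derivable, the negation-as-failure condition fails, and Inference Rule 1 cannot conclude \texttt{(permissible ?b ?c)} from \texttt{N}.

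To get \emph{complete} defeat, note that the defeat does not depend on the choice of \texttt{?b} or \texttt{?c}. Every behavior on \texttt{N}'s application grounds, in every context where \texttt{P} is also active, is blocked. Then, by the Prohibitive Closure rule, \texttt{(impermissible ?b ?c)} holds, and \texttt{?b} is also on \texttt{P}'s application grounds by transitivity of \texttt{entails}. Transitivity follows from Definition~\ref{def:entailment} by chaining the deductions.

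The main obstacle is that Inference Rule 1 could still derive \texttt{(permissible ?b ?c)} through some \emph{other} permission \texttt{N'}. I would handle this as Theorem~\ref{direct-ind-imp-obl} does, by assuming the testimony contains only \texttt{N} and \texttt{P}, so that \texttt{N} is the only candidate permission. I would also remark that this restriction is needed: an even later permission could legitimately re-add an exception, which is exactly the situation of Theorem~\ref{direct-ind-imp-obl}. Finally, I would note that the intersecting-context assumption is what makes \texttt{(entails ?c C2)} hold.
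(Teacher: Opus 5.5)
Your proposal is correct and follows the paper's proof. You bind \texttt{?proh} to \texttt{P}, use \texttt{(normPriorToNorm N P)} and \texttt{(entails B1 B2)} to fire Exception 1.1, which blocks Inference Rule 1 for \texttt{N}, and then Prohibitive Closure gives \texttt{(impermissible ?b ?c)}; your explicit assumption that the testimony contains only \texttt{N} and \texttt{P}, which rules out some other permission re-deriving permissibility, makes precise an assumption the paper's proof leaves implicit.
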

\begin{proof}Let \texttt{N} be the norm frame encoding of an obligation or discretionary norm stated by agent \texttt{A}, with behavior \texttt{B1} and context \texttt{C1}. Let \texttt{P} be the norm frame encoding of a prohibition later stated by \texttt{A}, with behavior \texttt{B2} and context \texttt{C2}. Assume \texttt{P} subsumes \texttt{N}, and thus \texttt{(entails B1 B2)} is true. Let context \texttt{?c} be a context in which both norms are active, or that \texttt{(entails ?c C1)} and \texttt{(entails ?c C2)} are true. Let \texttt{?b} be a behavior on \texttt{N}'s application grounds (via transitivity, also on the prohibition's), or that \texttt{(entails ?b B1)} is true. We show that in context \texttt{?c}, behavior \texttt{?b} is believed to be impermissible by agent \texttt{A}.

Given that \texttt{N} is an obligation or discretionary norm, \texttt{N} is also a permission. Given that \texttt{N} came before \texttt{P}, \texttt{(normPriorToNorm N P)} is true. Given that \texttt{(entails B1 B2)} is true, by exception 1.1 \texttt{(permissionDefeated N B1 C1 ?b ?c P)} is true. Therefore, \texttt{(permissible ?b ?c)} is false in \texttt{A}'s microtheory and, by Prohibitive Closure, \texttt{(impermissible ?b ?c)} is true. Therefore, prohibitions defeat earlier subsumed obligations and discretionary norms.
\end{proof}

\begin{example}
    Karli has said, ``You may share my medical records.'' She has also said, ``Do not tell my husband what prescriptions I am taking.''
    It is thus permissible to share Karli's medical records, but not her prescriptions with her husband. 
\end{example}

\begin{theorem}[Indirect Defeats of Obligations and Discretionary Norms by Strictly Subsumed Prohibitions]
\label{ind-obl-imp}
By Inference Rule 1,  obligations and discretionary norms get exceptions added by stating strictly subsumed prohibitions, regardless of order.
\end{theorem}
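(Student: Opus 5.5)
We argue directly from Inference Rule 1 and its two exceptions, in the style of the proofs of Theorems \ref{direct-ind-imp-obl} and \ref{direct-ind-obl-imp}. Let \texttt{N} be the norm frame of an obligation or discretionary norm stated by agent \texttt{A}, with behavior \texttt{B1} and context \texttt{C1}. Let \texttt{P} be the norm frame of a prohibition stated by \texttt{A}, either before or after \texttt{N}, with behavior \texttt{B2} and context \texttt{C2}. Assume \texttt{P} is strictly subsumed by \texttt{N}: \texttt{(entails B2 B1)} is true and \texttt{(entails B1 B2)} is false. Fix a context \texttt{?c} with \texttt{(entails ?c C1)} and \texttt{(entails ?c C2)}. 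As in the earlier proofs, assume no other prohibitions are stated. We must establish two claims. First, the permission survives outside the exception: for any \texttt{?b} with \texttt{(entails ?b B1)} but not \texttt{(entails ?b B2)}, \texttt{(permissible ?b ?c)} holds. Second, the exception is real: for any \texttt{?b} with \texttt{(entails ?b B2)}, \texttt{(impermissible ?b ?c)} holds. Both claims must hold regardless of the temporal order of \texttt{N} and \texttt{P}.

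\emph{Exception region.} Take \texttt{?b} with \texttt{(entails ?b B2)}. Since \texttt{(entails B1 B2)} fails, \texttt{(uninferredSentence (entails B1 B2))} is true. All premises of Exception 1.2 then hold with \texttt{?proh} bound to \texttt{P}, so \texttt{(permissionDefeated N B1 C1 ?b ?c P)} is derived. Exception 1.2 does not mention \texttt{normPriorToNorm}, so the order of \texttt{N} and \texttt{P} is irrelevant here. Hence Inference Rule 1 cannot fire via \texttt{N}. Since \texttt{N} is the only permission, \texttt{(permissible ?b ?c)} is not derivable, and Prohibitive Closure gives \texttt{(impermissible ?b ?c)}. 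By transitivity of entailment, \texttt{?b} is also on \texttt{N}'s application grounds, so this is a genuine exception to \texttt{N} rather than a vacuous case.

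\emph{Remaining grounds.} Take \texttt{?b} with \texttt{(entails ?b B1)} and \texttt{(entails ?b B2)} not derivable. We show that neither exception fires.
\begin{itemize}
\item Exception 1.1 requires \texttt{(entails B1 B2)}, which is false by strictness, so it fails whether or not \texttt{(normPriorToNorm N P)} holds.
\item Exception 1.2 requires \texttt{(entails ?b B2)}, which is not derivable, so it fails.
\end{itemize}
Thus \texttt{(uninferredSentence (permissionDefeated N B1 C1 ?b ?c ?proh))} is true, and Inference Rule 1 yields \texttt{(permissible ?b ?c)}. Again the order of the two norms played no role, which is exactly the ``regardless of order'' clause.

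\emph{Main obstacle.} The delicate step is the negation-as-failure reasoning in the exception region. We must ensure that no alternative derivation of \texttt{(permissible ?b ?c)} exists. Under the single-permission assumption this is immediate. If other permissions are present, we will restrict the claim to \texttt{N}'s own contribution, as the earlier theorems implicitly do. We also rely on \texttt{entails} being transitive, which follows from Definition \ref{def:entailment} by chaining deductions.
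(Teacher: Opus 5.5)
Your proof is correct and its core matches the paper's. Strictness makes \texttt{(entails B1 B2)} fail, so Exception 1.2, which has no \texttt{normPriorToNorm} premise, defeats \texttt{N} for every \texttt{?b} entailing \texttt{B2} in either temporal order, and Prohibitive Closure then yields \texttt{(impermissible ?b ?c)}. Your second part goes beyond the paper's proof, which omits it: there you show that both exceptions fail on \texttt{N}'s grounds outside \texttt{P}'s, so \texttt{(permissible ?b ?c)} still holds, and this is what establishes that an exception is added rather than \texttt{N} being defeated outright.
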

\begin{proof}
Let \texttt{N} be the norm frame encoding of an obligation or discretionary norm stated by an agent \texttt{A}, with behavior \texttt{B1} and context \texttt{C1}. Let \texttt{P} be the norm frame encoding of a prohibition stated by \texttt{A}, with behavior \texttt{B2} and context \texttt{C2}. Assume \texttt{N} strictly subsumes \texttt{P}. Let \texttt{?b} be a behavior on the prohibition's application grounds (via transitivity, also on \texttt{N}'s), or that \texttt{(entails ?b B2)} is true. Let \texttt{?c} be a context in which both norms are active, or that both \texttt{(entails ?c C1)} and \texttt{(entails ?c C2)} are true. We show that in context \texttt{?c}, behavior \texttt{?b} is believed to be impermissible by agent \texttt{A}.

Given that \texttt{N} is an obligation or discretionary norm, it is also a permission. However, given that \texttt{N} strictly subsumes \texttt{P}, \texttt{(entails B1 B2)} is false. Thus, exception 1.2 follows trivially, and \texttt{(permissionDefeated N B1 C1 ?b ?c P)} is true. Thus, \texttt{(permissible ?b ?c)} is false in \texttt{A}'s microtheory and, by Prohibitive Closure, \texttt{(impermissible ?b ?c)} is true. Therefore, obligations and discretionary norms get exceptions added by strictly subsumed prohibitions, regardless of temporal order.
\end{proof}

\begin{example}
    Karli has said, ``You may share my medical records.'' But she has also said, ``Do not upset my husband.'' So, it is permissible to share her medical records, but only when doing so does not also upset her husband. 
\end{example}

\begin{theorem}[Intersecting Defeats of Obligations and Discretionary Norms by Prohibitions]
\label{int-obl-imp}
By Inference Rule 1,  when an obligation or discretionary norm and a prohibition have been stated and neither subsumes the other, the obligation or discretionary norm is defeated at the intersection with the prohibition, regardless of temporal order. 
\end{theorem}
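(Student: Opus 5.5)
We follow the same template as the proof of Theorem~\ref{ind-obl-imp}, since the intersecting case is handled by the same exception. Let \texttt{N} be the norm frame encoding of an obligation or discretionary norm stated by agent \texttt{A}, with behavior \texttt{B1} and context \texttt{C1}. Let \texttt{P} be the norm frame encoding of a prohibition stated by \texttt{A} (before or after \texttt{N}), with behavior \texttt{B2} and context \texttt{C2}. Assume that neither subsumes the other, so both \texttt{(entails B1 B2)} and \texttt{(entails B2 B1)} fail to be provable. Let \texttt{?c} satisfy \texttt{(entails ?c C1)} and \texttt{(entails ?c C2)}. Let \texttt{?b} lie at the intersection of \texttt{B1} and \texttt{B2}, i.e.\ \texttt{(entails ?b B1)} and \texttt{(entails ?b B2)} both hold. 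We will show that \texttt{(impermissible ?b ?c)} holds in \texttt{A}'s microtheory, while behaviors entailing \texttt{B1} but not \texttt{B2} remain permissible.

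First, \texttt{N} is a permission by the reduction of obligations and discretionary norms. Next, we instantiate exception 1.2 with \texttt{?perm}$=$\texttt{N} and \texttt{?proh}$=$\texttt{P}. The body literals are satisfied as follows:
\begin{itemize}
\item \texttt{(isa P Prohibition)} holds by assumption.
\item \texttt{(entails ?c C2)} holds because both norms are active in \texttt{?c}.
\item \texttt{(entails ?b B2)} holds because \texttt{?b} is at the intersection.
\item \texttt{(uninferredSentence (entails B1 B2))} holds because \texttt{P} does not subsume \texttt{N}.
\end{itemize}
Exception 1.2 contains no \texttt{normPriorToNorm} literal, which gives the ``regardless of temporal order'' clause directly. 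Hence \texttt{(permissionDefeated N B1 C1 ?b ?c P)} is derivable. Inference Rule 1 therefore fails for \texttt{N} at \texttt{(?b, ?c)}.

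The main obstacle is that Inference Rule 1 could still fire via some other permission. We restrict, as in the earlier proofs, to the testimony set $\{$\texttt{N},\texttt{P}$\}$, so \texttt{N} is the only candidate permission. Then \texttt{(permissible ?b ?c)} is unprovable, and Prohibitive Closure yields \texttt{(impermissible ?b ?c)}.

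To justify the word ``at'' the intersection, we also show that \texttt{N} is not defeated outside it. Take \texttt{?b} with \texttt{(entails ?b B1)} but not \texttt{(entails ?b B2)}. Exception 1.2 fails on the \texttt{(entails ?b B2)} literal. Exception 1.1 fails because \texttt{(entails B1 B2)} is false. So Inference Rule 1 yields \texttt{(permissible ?b ?c)}, and \texttt{N} is defeated exactly at the intersection with \texttt{P}.
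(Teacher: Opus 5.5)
Your proof is correct and follows the paper's argument. You instantiate Exception~1.2 with \texttt{?perm}$=$\texttt{N} and \texttt{?proh}$=$\texttt{P}, use the unprovability of \texttt{(entails B1 B2)} and the absence of any \texttt{normPriorToNorm} literal, and conclude via Prohibitive Closure; your explicit restriction to the testimony set $\{$\texttt{N},\texttt{P}$\}$ (which the paper leaves implicit) and your check that \texttt{N} still yields permissibility outside the intersection are sound additions beyond what the paper proves.
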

\begin{proof}
Let \texttt{N} be the norm frame encoding of an obligation or discretionary norm stated by an agent \texttt{A}, with behavior \texttt{B1} and context \texttt{C1}. Let \texttt{P} be the norm frame encoding of a prohibition stated by \texttt{A}, with behavior \texttt{B2} and context \texttt{C2}. Assume that neither norm subsumes the other. Let \texttt{?c} be a context in which both norms are active, or that both \texttt{(entails ?c C1)} and \texttt{(entails ?c C2)} are true. Let \texttt{?b} be a behavior on both norms' application grounds, or that \texttt{(entails ?b B1)} and \texttt{(entails ?b B2)} are true. We show that in context \texttt{?c}, behavior \texttt{?b} is believed to be impermissible by agent \texttt{A}.

Given that \texttt{N} is an obligation or discretionary norm, \texttt{N} is also a permission. However, given that neither norm subsumes the other, \texttt{(entails B1 B2)} is false. Thus, exception 1.2 follows trivially, and \texttt{(permissionDefeated N B1 C1 ?b ?c P)} is true. Thus, \texttt{(permissible ?b ?c)} is false in \texttt{A}'s microtheory and, by Prohibitive Closure, \texttt{(impermissible ?b ?c)} is true. Therefore, obligations and discretionary norms are defeated by prohibitions at their intersection, regardless of their temporal ordering.
\end{proof}

Theorems 1-4 demonstrate that Inference Rule 1 and 2 soundly infer agents' normative beliefs from their ongoing, possibly conflicting normative testimony. Thus, they demonstrate that the formalism meets our criteria for norm conflict resolution. Given a set $N$ of norm frame encoding of normative testimony, under Inference Rules 1 and 2 it holds that if $N \vdash permissible(b,c)$, then $N \nvdash \neg permissible(b,c)$. Next, we empirically evaluate its use in norm-guided plans.


\section{Empirical Evaluation}
 
To empirically evaluate our approach in a human-AI setting, we have developed \textit{SocialBot}. SocialBot is an AI agent built in the Companion Cognitive Architecture~\cite{forbus2017analogy} with an interface to MS Teams~\cite{MSTeams17}. Companions have been used for various AI research projects, such as knowledge extraction~\cite{Ribeiro2021} and modeling commonsense reasoning~\cite{blass2017analogical}. 

Users interact with SocialBot via NL, and SocialBot translates to semantic representations via the Companion Natural Language Understanding System (CNLU)~\cite{tomai2009ea} and pragmatic rules. We provide a screenshot of a dialogue between SocialBot and two users in Figure \ref{fig:image} \footnote{All names are changed to preserve user privacy}.

 We evaluate SocialBot in the domain of information sharing. This is a rich domain for testing norm-guided planning, as we regularly clarify who should have access to what information and expect others to adhere to these privacy norms. Thus, SocialBot supports three types of NL statements. First, users can teach it their preferences of liking or disliking something. We focus on food, drink, and academic topic preferences, while avoiding accumulating more sensitive information (e.g., political preferences), as per our IRB protocol. For example, when Karli tells Socialbot \textit{``I like AI.''}, this preference is automatically encoded as \texttt{(likesType Karli AI)} and stored in its microtheory that models Karli. Second, users can teach SocialBot privacy norms by stating normative testimony. For example, Karli states, \textit{``You must share my likes about AI.''}. As described previously, this is automatically encoded in a corresponding norm frame representation. Third, users can inquire about campus events and other users' preferences e.g., Jan asks, \textit{``What does Karli like?''}

Next, we describe how SocialBot responds to such queries with respect to users' dynamically changing privacy norms. Note that we compute entailment (Definition \ref{def:entailment}) with an algorithm similar to~\cite{olson2023mitigating} and the query freezing method described in~\cite{sensoy2012owl}.

\begin{figure}[t]
  \centering
  \setlength{\fboxsep}{0pt}
\fbox{\includegraphics[width=0.95\linewidth]{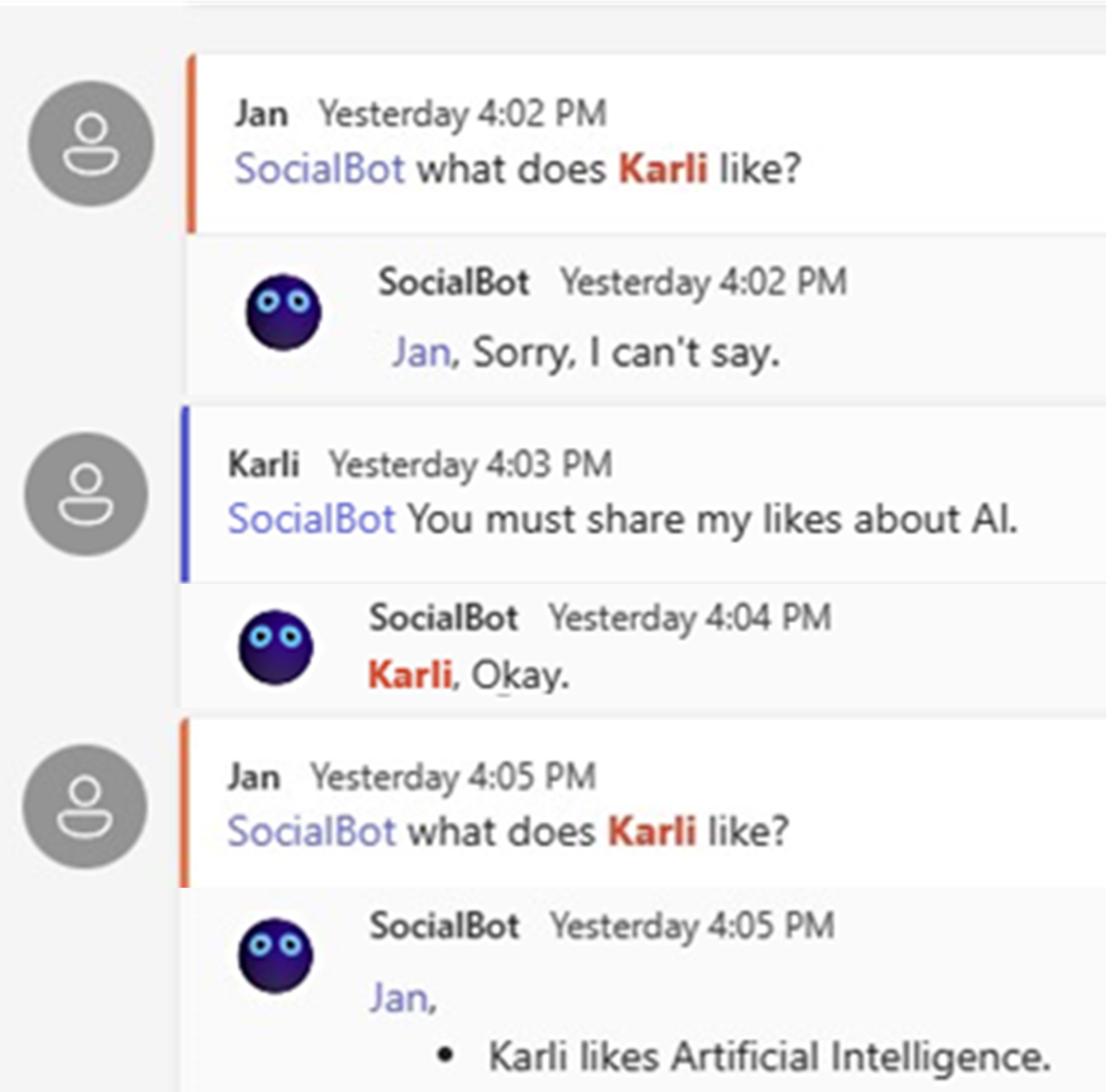}}
  \caption{SocialBot Learning and Respecting Privacy Norms.}
  \label{fig:image}
\end{figure} 

\subsubsection{Respecting Dynamically Changing Privacy Norms}
Companions use a Hierarchical Task Network (HTN) planning system~\cite{Nau1999}. As noted previously, an HTN plan is of a similar form to our abstract concept of a plan containing
\textit{action predicates}, \textit{methods} that map actions to ordered lists of sub-actions, and a set of \textit{preconditions} for methods. Thus, a \textit{ norm-guided} HTN plan is an HTN plan with a normative belief as one of its preconditions.

In this domain, SocialBot operates under a Prohibitive Closure assumption to preserve user privacy. Thus, its preconditions contain a check if sharing a user's information is permissible to that user. If this fails, the preconditions will be false, and the plan will not be executed. We provide a simplified version of one of SocialBot's norm-guided HTN plans in predicate calculus below. Note that \texttt{(ist-Information <mt> <fact>)} means that \texttt{<fact>} is true in  microtheory \texttt{<mt>}.

\texttt{(preconditionForMethod}
  
\texttt{\hspace{2mm}(and}

\texttt{\hspace{4mm}(factsInDiscourse ?d ?d-facts)}

\texttt{\hspace{4mm}(askingPreference ?dis-like ?object}

\texttt{\hspace{6mm}?owner ?asker)}

\texttt{\hspace{4mm}(ist-Information ?owner}

\texttt{\hspace{6mm}(?dis-like ?owner ?object))}

\texttt{\hspace{4mm}(ist-Information ?owner}

\texttt{\hspace{6mm}(permissible}

\texttt{\hspace{8mm}(and (isa ?share SharingPref)}

\texttt{\hspace{10mm}(object ?share ?object)}

\texttt{\hspace{10mm}(hearer ?share ?asker))}

\texttt{\hspace{8mm}?d-facts)))}

\texttt{\hspace{2mm}(methodForAction}

\texttt{\hspace{4mm}(respondToUser ?d ?s-id ?dis-like}

\texttt{\hspace{6mm}?object ?owner ?asker)}

\texttt{\hspace{4mm}(actionSequence}

\texttt{\hspace{6mm}(TheList}

\texttt{\hspace{8mm}(respond  ?d}

\texttt{\hspace{10mm}(?dis-like ?owner ?object))))))}

We illustrate with an example below by considering the interaction between Jan and SocialBot in Figure~\ref{fig:image}.

\begin{example} [How SocialBot Respects Karli's Current Privacy Norms]
When responding to Jan, SocialBot attempts to execute the action \texttt{(respondToUser ?d ?s-id
?dis-like ?object ?owner ?asker)}. It does so by first proving its method's preconditions via abduction. The first precondition bundles the facts in the current discourse into a conjunction \texttt{?d-facts} for normative reasoning.

Jan has asked what Karli likes. CNLU and pragmatic rules translates this sentence, \texttt{?s-id}, to the logical form: \texttt{(askingPreference likesType ?object Karli Jan)}, which is true in the discourse context \texttt{?d}. Thus our second precondition is true with said bindings.

With the third precondition, SocialBot queries for what Karli likes:  \texttt{(ist-Information Karli (likesType Karli ?object))}. In this case, Karli has told SocialBot she likes AI, so this query returns bindings for \texttt{?object} as \texttt{ArtificialIntelligence}. Thus, the third precondition of the method is true. When no bindings are returned, SocialBot replies, ``I don't know.''

The fourth precondition makes this a norm-guided plan. Here, with any bindings thus far, SocialBot determines if Karli current believes revealing this preference with Jan is permissible, given the facts in the current discourse context.

If this query for her normative belief fails, via our Prohibitive Closure assumption, this act is believed to be impermissible. Thus, SocialBot responds, ``Sorry I can't say.'' In this example, via Inference Rule 1 and Karli's previous normative testimony,  SocialBot can prove she believes this is permissible and thus responds with her retrieved preference.
\end{example}

\subsection{Experiment}
We hypothesize that SocialBot 1) correctly updates users' information sharing norms given their ongoing stream of normative testimony, and 2) properly answers preference queries in adherence to inferred normative beliefs. However, empirically evaluating this is challenging due to the large space of permutations of norm conflicts, preference introductions, normative testimony, and queries. We are continually collecting such real-world user interactions with SocialBot, but doing so will take quite some time. Therefore, we have constructed and tested our approach on a synthetic dataset of 1,536 NL dialogues containing all such permutations. We describe this dataset and our findings next. 

\subsection{Synthetic Dataset}
The synthetic dataset contains 1,536 dialogue cases of the form: \textit{1) norm conflict type, 2) id, 3) speaker indicator, 4 \& 5) preference introductions, 6 \& 7) normative testimony, 8) speaker indicator, 9) preference query to test, and 10) a corresponding response label we provided manually}. The conflict type, id, and speaker indicators serve as meta-data. The rest are NL sentences that simulate dialogues between SocialBot and two different users, where users change via the speaker indicators. We provide an example from the dataset below.

\begin{example}[Data Point 766] Intersecting Conflict, 766, speaker: Plato, ``I like juice.'', ``I like soda.'', ``Do not share my preferences about drinks with Socrates.'', ``You may share my preferences about juice.'', speaker: Socrates, ``What does Plato like?'', ``I can't say.''
\end{example}

\subsection{Experimental Results}
In our experiment, we first automatically input each dialogue case sentence by sentence into SocialBot. When then logged SocialBot's response to the NL query. Finally, we compared SocialBot's responses to each corresponding true label.

\textbf{SocialBot correctly responded to 100\% of the preference queries.} Given 1,536 cases in total, and 5 possible responses for each case, this yields a significantly low p-value: $\frac{1}{5}^{1,536} < .01$. Our findings are thus inconsistent with the null hypothesis that SocialBot's performance is due to correctly responding at random. Furthermore, because we tested all possible cases of normative conflicts, these findings yield support for our hypotheses that our approach 1) accurately updates users' normative beliefs given their conflicting normative testimony and 2) properly adheres to these dynamically changing normative beliefs during planning.

\section{Related Work}
Automatically resolving norm conflicts has been of recent interest in the multi-agent community~\cite{santos2017detection}. A frequently used method is to rewrite conflicting norms e.g.,~\cite{da2014normative}. Our defeasible approach via negation as failure offers a simpler update mechanism, as it does not have to maintain edits or delete norms. Our approach also formalizes concepts of deontic inheritance and thus can resolve indirect and intersecting norm conflicts, allowing agents to add exceptions to previously stated norms.

Inference Rule 1 and 2 have similarities to the rules of the Defeasible Deontic Inheritance Calculus  (DDIC) \cite{olson2024defeasibledeonticcalculusresolving}. However, our formalism builds on predicate calculus, rather than propositional logic, and is thus more expressive. Furthermore, the DDIC does not make a default assumption about normative beliefs. Most importantly, their rules are rooted in possible world semantics~\cite{kripke1963semantical}, under which our rules would be unsound. This means our formalism can make stronger inferences from permissions.

Lastly,~\cite{kollingbaum2004strategies} also provides a formalism for norm-guided planning. However, their approach contains only an implicit Permissive Assumption. Moreover, their approach operates within a constrained formal language, while we have demonstrated that our approach handles constrained natural language in a human-AI setting.

\section{Discussion of Limitations}
Here we discuss a few limitations of this work. First, our empirical evaluation was quite limited as we only considered a single action (sharing preferences) and tested on a synthetic dataset. We plan to extend our implementation to handle more sophisticated NL dialogues in future work. Second, we formalize guiding plans with a single agent's dynamically changing norms. Thus, our approach cannot guide its plans by weighing the normative beliefs of multiple agents. Third, our theoretical demonstration assumes that the agent has background knowledge necessary to compute entailment, and thus to resolve norm conflicts. Lastly, our formalism does not consider the interaction between obligations and discretionary norms.  We plan to address each of these limitations in future work.

\section{Conclusion and Future Work}

This paper presents a formalism for dynamically resolving conflicts in an agent's normative testimony to infer their normative beliefs, and then using these beliefs as guard rails on plans. We have demonstrated the efficacy of our approach through formal proofs and through an experiment with SocialBot, an AI agent built in the Companion Cognitive Architecture. Testing in the domain of information sharing, we have demonstrated that SocialBot learns users' dynamically changing privacy norms and respects these norms when sharing information with others. Thus, this paper is a contribution towards adaptive guard rails for AI agents' actions.

\bibliographystyle{named}
\bibliography{ijcai26}

\end{document}